
\documentclass[times, preprint, 10pt]{elsarticle}


\usepackage{todonotes}
\usepackage{booktabs}
\usepackage{dsfont}

\usepackage{algorithm}%
\usepackage{algorithmicx}%
\usepackage{algpseudocode}%
\usepackage{subcaption}

\usepackage{amsthm}%

\newtheorem{remark}{Remark}
\newtheorem{definition}{Definition}
\newtheorem{lemma}{Lemma}

\usepackage{mathtools}
\mathtoolsset{showonlyrefs} 

\usepackage{comment}

\newcommand{\err}{\text{err}}
\newcommand{\Err}{\text{Err}}
\newcommand{\XX}{\boldsymbol{X}}
\newcommand{\YY}{\boldsymbol{Y}}
\newcommand{\ZZ}{\boldsymbol{Z}}
\newcommand{\eps}{\varepsilon}

\usepackage{todonotes}


\usepackage{amssymb, amsmath, amsfonts, stmaryrd}


\journal{Pattern Recognition}

\begin{document}

\begin{frontmatter}



\title{Beyond Conformal Predictors: Adaptive Conformal Inference with Confidence Predictors}


\author[inst1,inst2]{Johan Hallberg Szabadv\'ary\corref{cor1}}
\ead{johan.hallberg.szabadvary@math.su.se,johan.hallberg.szabadvary@ju.se}
\author[inst2]{Tuwe L\"ofstr\"om}
\ead{tuwe.lofstrom@ju.se}

\affiliation[inst1]{organization={Department of Mathematics, Stockholm University},
            city={Stockholm},
            country={Sweden}}

\affiliation[inst2]{organization={Department of Computing, Jönköping School of Engineering},
            city={Jönköping},
            country={Sweden}}

\cortext[cor1]{Corresponding author}

\begin{abstract}
    Adaptive Conformal Inference (ACI) provides finite-sample coverage guarantees, enhancing the prediction reliability under non-exchangeability. This study demonstrates that these desirable properties of ACI do not require the use of Conformal Predictors (CP). We show that the guarantees hold for the broader class of confidence predictors, defined by the requirement of producing nested prediction sets, a property we argue is essential for meaningful confidence statements. We empirically investigate the performance of Non-Conformal Confidence Predictors (NCCP) against CP when used with ACI on non-exchangeable data. In online settings, the NCCP offers significant computational advantages while maintaining a comparable predictive efficiency. In batch settings, inductive NCCP (INCCP) can outperform inductive CP (ICP) by utilising the full training dataset without requiring a separate calibration set, leading to improved efficiency, particularly when the data are limited. Although these initial results  highlight NCCP as a theoretically sound and practically effective alternative to CP for uncertainty quantification with ACI in non-exchangeable scenarios, further empirical studies are warranted across diverse datasets and predictors.
\end{abstract}

\begin{graphicalabstract}
\includegraphics[width=\textwidth]{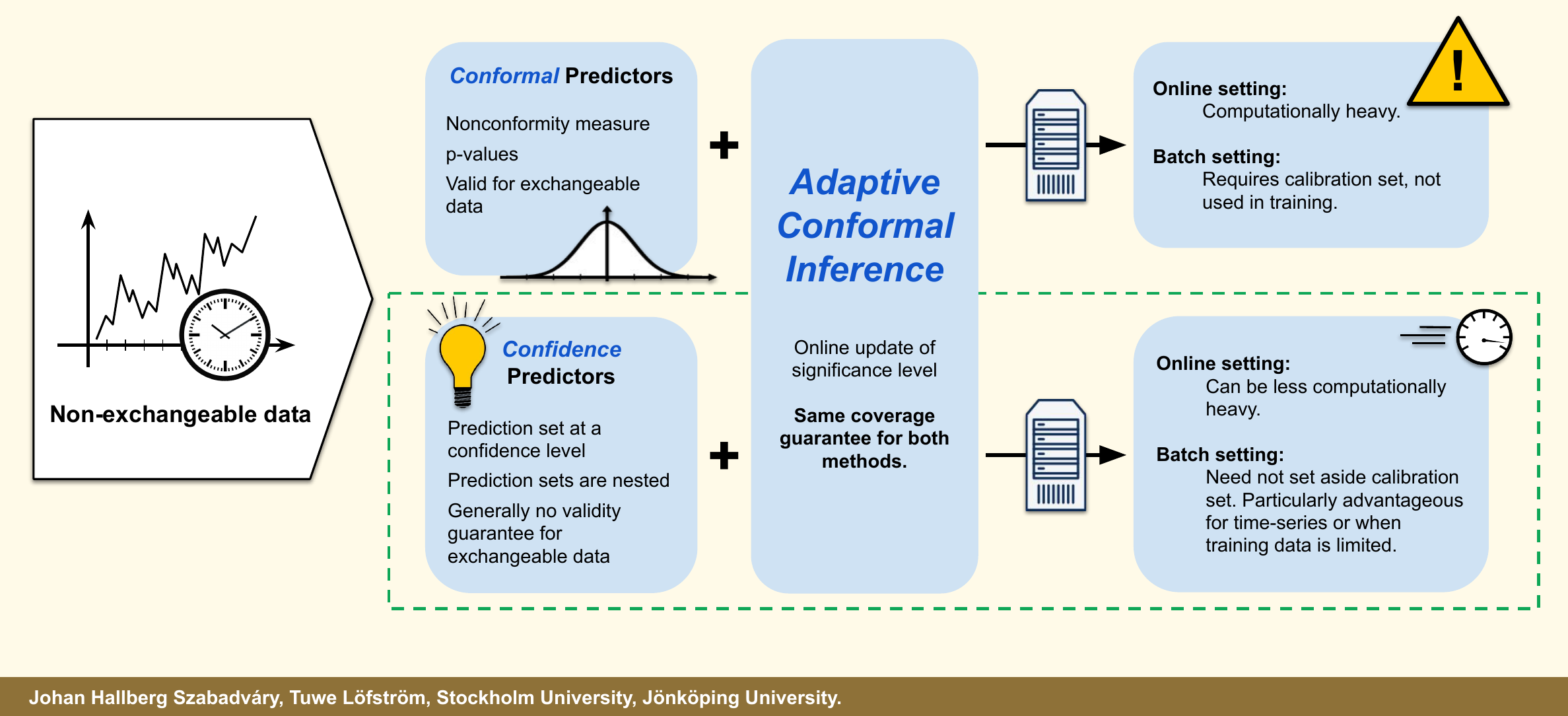}
\end{graphicalabstract}

\begin{highlights}
    \item 
    Shows that adaptive conformal inference (ACI) achieves finite-sample coverage guarantees without needing conformal predictors (CP).
    \item 
    Demonstrates empirically that non-conformal confidence predictors (NCCP) can offer comparable or better performance than CP for non-exchangeable data.
    \item 
    Highlights that inductive NCCP avoids sacrificing training data for calibration, potentially improving predictive efficiency compared to inductive CP, especially when data are limited.
    \item
    Clarifies that the primary advantage of NCCP in online settings may be computational efficiency, whereas in batch settings, it relates to improved data utilisation.
    \item 
    Argues that nested prediction sets are necessary for meaningful confidence-based predictions regardless of the underlying predictor type.
\end{highlights}

\begin{keyword}
Adaptive Conformal Inference (ACI)
\sep 
Confidence Predictors
\sep 
Non-exchangeable Data
\sep
Coverage Guarantee
\sep
Conformal Prediction (CP)
\end{keyword}

\end{frontmatter}



\section{Introduction}\label{sec:intro}

    The need for uncertainty quantification in pattern recognition arises in many safety-critical applications, such as medical diagnosis and autonomous driving, where recognising and addressing uncertainty can prevent costly or dangerous mistakes. Unfortunately, many models, including modern deep-learning methods, suffer from poor calibration \cite{guo2017calibration} and \cite{nguyen2015deep}, meaning that when the model is asked to predict with 80\% confidence, we cannot expect it to be correct 80\% of the time. 
    
    Conformal prediction (CP) is a general framework for distribution-free uncertainty quantification. It can be used with essentially any machine learning algorithm and has guaranteed properties of validity, provided that the data are drawn from a probability distribution that is exchangeable, meaning essentially that any permutation is equally probable. For details on exchangeability, see \cite{alrw2}. The strongest notions of validity are achieved in the online setting, where Reality outputs successive examples $z_n:=(x_n, y_n)\in \boldsymbol{Z}:=\boldsymbol{X}\times \boldsymbol{Y}$, each of which consists of an object $x_n\in \boldsymbol{X}$ and its associated label $y_n\in \boldsymbol{Y}$. CP produces prediction sets, $\Gamma_n^{\varepsilon}$, at a user-specified significance level $\varepsilon$, using $z_1,\dots, z_{n-1}$ and $x_n$ as inputs. 
    
    If the exchangeability assumption is violated, the validity guarantees of CP are lost. Adaptive Conformal Inference (ACI) was suggested by Gibbs and Cand\`es \cite{gibbs2021adaptive} as a method to produce prediction sets that are robust to violations of exchangeability.     
    Fundamentally, ACI employs a simple feedback control mechanism that dynamically adjusts the working significance level based on recent prediction errors to steer the long-term empirical error rate towards a desired target $\varepsilon$.
    In this study, we show that there is no need to use ACI with a conformal predictor. The same guaranteed error rates hold even if we use the more general notion of a confidence predictor, which is often much more computationally tractable in the online setting. The applicability of ACI to non-conformal set predictors follows directly from its proof, which makes it somewhat surprising that, to the best of our knowledge, it has not been highlighted in the literature. Since nothing is gained by using the heavier machinery of CP in terms of error rate, the question is whether CP offers more efficient prediction sets. 

    We investigate this question empirically, both in the natural habitat of CP, the online setting, and the more popular inductive or batch setting, where inductive conformal predictors (ICP) are more computationally manageable. Our numerical experiments on publicly available datasets that are known to be non-exchangeable indicate that CP adds little beyond the computational effort, when compared with non-conformal confidence predictors (NCCP), when the assumption of exchangeability is violated in the online setting. In the more popular batch setting, we compare ICP with inductive NCCP (INCCP). 
    One may suspect that sacrificing some of the available training data for calibration, which is required by ICP, may lead to less efficient (informally larger) prediction sets compared with alternative methods. This intuition is supported by our investigation of the efficiency over different calibration set sizes.

    The remainder of this paper is organised as follows. Section \ref{sec:related} discusses related work on the subject of uncertainty quantification in general, and CP for non-exchangeable data in particular. Section \ref{sec:conformalTheory} introduces conformal and confidence predictors as well as the relevant notation. Readers familiar with the terminology used in \cite{alrw2} may find it convenient to skip ahead to Section \ref{sec:ACI}, where ACI and its finite sample coverage guarantee is introduced, as well as a trivial but technically necessary extension of conformal and confidence predictors. Our main result is given in Section \ref{sec:nothingConformal}, where we restate Lemma 4.1 from \cite{gibbs2021adaptive}, which is the key result used to prove the finite sample guarantee of ACI. We indicate that this lemma does not rely on any particular property of conformal, or even confidence predictors. We then argue in Section \ref{sec:whyConfidencePredictors} that while not strictly necessary for the finite sample guarantee of ACI, confidence predictors represent the natural way of predicting at a confidence level. Numerical experiments on synthetic and real data are described in Section \ref{sec:experimentSetup}, and the results are presented in Section \ref{sec:results}. Finally, Section \ref{sec:conclusions} concludes the paper. 

\section{Related work}\label{sec:related}
    \subsection{Uncertainty quantification}
        The main reference for CP is the book “Algorithmic Learning in a Random World” \cite{alrw1} and its second edition \cite{alrw2}. Several excellent shorter introductions to CP exist, including \cite{TOCCACELI2022108507} and \cite{shafer2008tutorial}. CP has been the subject of special issues of several journals, including Pattern Recognition \cite{GAMMERMAN2022108561} and Neurocomputing \cite{GAMMERMAN2020264}. 
    
        The most basic form of CP solves the uncertainty quantification problem by, instead of predicting a single point, outputting a prediction set associated with a user-specified significance level $\eps$. Provided only that the data are exchangeable with no other distributional assumptions, the prediction sets will fail to contain the true label with probability $\eps$, making conformal predictors provably well calibrated \cite{alrw2}. Depending on the specific use case, set predictions may not be desirable; in this case, the conformal framework can be used to produce probabilistic predictions both in classification (Venn predictors \cite{vovk2014venn}) and regression (Conformal Predictive Systems \cite{VOVK2022108536}). 
    
        Bayesian methods \cite{ghahramani2015probabilistic} and \cite{YELLENI2024110003} provide an alternative uncertainty quantification which, however, rely on a well-specified prior. Conformal predictors can be used in conjunction with Bayesian models for protection. For details, see \cite{burnaev2014efficiency} and the discussion of hard and soft models in \cite{alrw2}.
    
        It is well known \cite{nguyen2015deep} and \cite{guo2017calibration} that many machine learning models are poorly calibrated, and several post-hoc calibration methods, including Platt scaling \cite{platt1999probabilistic}, temperature scaling \cite{guo2017calibration} and Dirichlet scaling \cite{kull2019beyond} exist to address this problem. Common for post-hoc calibration methods is to use a calibration set, not used in training, to learn a calibration map that transforms the model's predictions to better calibrated ones. 
    
    \subsection{Conformal prediction for non-exchangeable data}
        While powerful and general, the validity guarantees of CP rely on the exchangeability assumption, which is a limitation e.g. in time-series forecasting. To address these limitations, many authors have suggested solutions to retain at least some validity guarantees even when exchangeability is violated. These include introducing weights to attempt to deal with distribution shifts, such as \cite{tibshirani2019conformal} and \cite{barber2023conformal}, and what could be described as control theoretic approaches that attempt to achieve the desired error rate by varying the significance level. 
        
        ACI \cite{gibbs2021adaptive} falls into the second category of control theoretic approaches and has become a popular choice for non-exchangeable data. It has been suggested as a good method for time-series forecasting \cite{zaffran2022adaptive}. It was implemented in this mode in the Python package MAPIE \cite{Cordier_Flexible_and_Systematic_2023}. In the time-series domain, it has also been suggested for multi-step-ahead forecasting by \cite{sousa2024general} and \cite{pmlr-v230-hallberg-szabadvary24a}. The more complicated conformal PID algorithm \cite{angelopoulos2024conformal}, inspired by classical proportional-integral-derivative (PID) control, can be viewed as a generalisation of ACI, as the authors recover it as a special case of conformal PID. Recent developments include adapting the step-size parameter in ACI \cite{gibbs2024conformal} and \cite{pmlr-v235-podkopaev24a}.

\section{Theoretical background}\label{sec:conformalTheory}

    This section introduces the relevant theoretical background of mathematical objects and notation. Readers familiar with conformal prediction and the notation used in \cite{alrw2} may find it convenient to skip ahead to the next section, where ACI is introduced.
    
    \subsection{Confidence prediction in the online setting}
        Given two measurable spaces $\XX$ and $\YY$, the former called the object space and the latter is the label space, we assume that Reality outputs successive pairs
        \begin{equation}\label{eq:onlineExamples}
            (x_1,y_1),(x_2,y_2),\dots
        \end{equation}
        called examples. For notational convenience, we write $z_i := (x_i,y_i)\in\XX\times\YY:=\ZZ$. The measurable space $\ZZ$ is called the example space. Thus, the infinite sequence \eqref{eq:onlineExamples} is an element of the measurable space $\ZZ^{\infty}$ and we assume that it is drawn from a probability distribution $P$ on $\ZZ^{\infty}$. The standard assumption in CP is that $P$ is exchangeable, but in this study, we make no such assumption, and $P$ can be any distribution.

        Most machine learning methods are, so called, simple predictors, with the aim of predicting the label $y_n$. Here, we are interested in another type of prediction. Instead of predicting $y_n$, we want to predict subsets of $\YY$ of varying sizes, but large enough that we can be confident that $y_n$ will fall in them. Of course, this is a simple task; just predict $\YY$, and we are absolutely sure that the prediction set will contain $y_n$. However, we may be willing to accept a slightly smaller confidence level, provided that the prediction set is smaller and thus more informative. 
        \begin{definition}[Confidence predictor]
            \label{def:confPred}
            A confidence predictor is a measurable function 
            \begin{equation}
                \Gamma : \boldsymbol{Z}^*\times\boldsymbol{X}\times(0,1) \to 2^{\boldsymbol{Y}}
            \end{equation}
            that, for each significance level $\varepsilon\in(0,1)$ outputs the prediction set 
            \begin{equation}
                \Gamma_n^{\varepsilon} := \Gamma^{\varepsilon}(x_1,y_1,\dots,x_{n-1},y_{n-1},x_n)
            \end{equation}
            with the additional property that $\varepsilon_1\geq\varepsilon_2$ implies
            \begin{equation}
                \Gamma^{\varepsilon_1}(x_1,y_1,\dots,x_{n-1},y_{n-1},x_n) \subseteq \Gamma^{\varepsilon_2}(x_1,y_1,\dots,x_{n-1},y_{n-1},x_n).
            \end{equation}
            This last property is called the nested prediction set property.
        \end{definition}
        Recall that $2^{\YY}$ is the set of all subsets of $\YY$. Thus, a confidence predictor outputs a subset of the label space based on all previous examples and the current object, and the idea is that it should contain the true label $y_n$ with a user-specified confidence. Moreover, the prediction sets are nested, as illustrated in Figure \ref{fig:nestedSets}.
        \begin{figure}[h!]
            \centering
                \begin{tikzpicture}
                    \definecolor{lightgray}{gray}{0.8}
                    \definecolor{mediumgray}{gray}{0.6}
                    \definecolor{darkgray}{gray}{0.4}
                    
                    \filldraw[fill=lightgray] (0, 0) ellipse (4 and 2.5);
                    
                    \filldraw[fill=mediumgray] (-1, 0) ellipse (2.5 and 1.5);
                    
                    \filldraw[fill=darkgray] (-0.3, 0.2) ellipse (1.5 and 1);
                    
                    \draw[black] (4, 1.5) rectangle (6.5, 4);
                    \filldraw[fill=lightgray] (4.2, 3.7) rectangle (4.7, 3.2);
                    \node at (5.5, 3.45) {$\eps=0.1$};
                    \filldraw[fill=mediumgray] (4.2, 3) rectangle (4.7, 2.5);
                    \node at (5.5, 2.75) {$\eps=0.2$};
                    \filldraw[fill=darkgray] (4.2, 2.3) rectangle (4.7, 1.8);
                    \node at (5.5, 2.05) {$\eps=0.3$};
                \end{tikzpicture}
            \caption{Illustration of nested prediction sets.}
            \label{fig:nestedSets}
        \end{figure}
        
        Several machine-learning methods can be trivially turned into confidence predictors. In the classification case, confidence thresholding using confidence scores (for example, \cite{HAGHPANAH2023109683} or simple majority voting in ensemble methods are simple ways to define confidence predictors. For regression, many methods natively support confidence intervals. In the case of parametric methods, these are nested, such as ordinary least squares. Another method is quantile regression, but care must be taken, as many quantile regression methods do not guarantee that the prediction sets are nested. However, it is always possible to force prediction intervals to be nested by applying isotonic regression (see \cite{alrw2} or \cite{Ayer1955AnInformation} for details) to the predicted quantiles and use the calibrated predictive quantiles to construct prediction intervals.
    
        We need some more notation. Let $\Gamma$ be a confidence predictor that processes the data sequence
        \begin{equation}
            \omega = (x_1,y_1,x_2,y_2,\dots)
        \end{equation}
        at significance level $\varepsilon$. We say that $\Gamma$ makes an error in the $n$th trial if $y_n\notin\Gamma_n^{\varepsilon}$. More precisely,
        \begin{equation}\label{eq:err}
            \err_n^{\varepsilon}(\Gamma, \omega) := 
            \begin{cases}
                1 & \text{if $y_n\notin\Gamma_n^{\varepsilon}$} \\
                0 & \text{otherwise,}
            \end{cases}
        \end{equation}
        and the number of errors during the first $n$ trials is 
        \begin{equation}\label{eq:Err}
            \Err_n^{\varepsilon}(\Gamma,\omega) := \sum_{i=1}^{n}\err_n^{\varepsilon}(\Gamma, \omega).
        \end{equation}
    
    \subsection{Validity}
    
        The number $\err_n^{\eps}(\Gamma, \omega)$ is the realisation of a random variable $\err_n^{\eps}(\Gamma)$.
        We say that a confidence predictor $\Gamma$ is exactly valid if, for each $\eps$, 
        \begin{equation}
            \err_1^{\eps}(\Gamma), \err_2^{\eps}(\Gamma),\dots
        \end{equation}
        is a sequence of independent Bernoulli random variables with parameter $\eps$. In other words, the event of making an error is like getting heads when tossing a biased coin, where the probability of getting heads is always $\eps$. There is also the notion of conservative validity, which is more complicated to state but essentially means that the error sequence is dominated by a sequence of independent Bernoulli variables with parameter $\eps$. For the complete statement, we refer to \cite{alrw2}.
        
        We say that $\Gamma$ is asymptotically valid if 
        \begin{equation}\label{eq:assVal}
            \lim_{n\to\infty}\frac{1}{n}\Err_n^{\eps}(\Gamma) = \eps
        \end{equation}
        for each $\eps\in(0,1)$.
    
    \subsection{Conformal predictors}
    
        Conformal predictors rely on the notion of a nonconformity score. Informally, this is a function that quantifies how “strange” or “unusual” an example $z$ is in relation to what we have seen before. More formally, given a sequence of examples $z_1, \dots, z_n$, we form the bag (or multiset) $\Sigma_n := \lbag z_1,\dots,z_n\rbag$. The set of all bags of size $n$ formed from examples in $\ZZ$ is denoted $\ZZ^{(n)}$. A nonconformity measure is a measurable function 
        \begin{equation}
            \begin{aligned}
                A : \ZZ^{(*)}\times\ZZ &\to \overline{\mathbb{R}}\\
                (\Sigma, z) &\mapsto \alpha.                
            \end{aligned}
        \end{equation}
        Given a simple predictor, for example, a machine learning model that outputs the prediction $\hat{y}$ for the label $y$, a natural choice of the nonconformity measure is $\alpha := |y-\hat{y}|$.
        \begin{definition}[Conformal predictor]
            The conformal predictor determined by a nonconformity measure $A$ is the confidence predictor defined by setting
            \begin{equation}\label{eq:defCP}
                \Gamma^{\eps}(z_1,\dots,z_{n-1}, x_n):=\Gamma^{\eps}_n :=\left\{y\in \boldsymbol{Y}:\frac{|\{i=1,\dots,n:\alpha_i\geq\alpha_n\}|}{n} > \eps \right\}
            \end{equation}
            where
            \begin{equation}
                \begin{aligned}
                    \alpha_i &:= A(\Sigma,(x_i,y_i)), & i=1,\dots,n-1\\
                    \alpha_n &:= A(\Sigma,(x_n,y)), & \\
                    \Sigma &:= \Lbag (x_1,y_1),\dots,(x_{n-1},y_{n-1}), (x_n,y)\Rbag.
                \end{aligned}
            \end{equation}
            The fraction $\frac{|\{i=1,\dots,n:\alpha_i\geq\alpha_n\}|}{n}$ is called the p-value of the example $(x_n,y)$.
        \end{definition}  
                
        The key result about conformal predictors is that they are valid under the exchangeability assumption. A conformal predictor is conservatively valid, and by introducing some randomisation to the special cases when $\alpha_i=\alpha_j$, which results in a smoothed conformal predictor, we can even get exact validity. However, when exchangeability is violated, these validity guarantees are lost.
        
        The main disadvantage of conformal predictors is their, often, intractable computational cost. In a regression setting where $\YY=\mathbb{R}$, it is theoretically necessary to compute the p-value in \eqref{eq:defCP} for each real number, which is clearly impossible. For some special cases, such as ridge regression, efficient implementations exist, but generally the computational cost is too high. For this reason, inductive conformal predictors (ICP) have been introduced. We provide a brief description here and refer the interested reader to \cite{alrw2} for further details. Given a training set of size $l$, we split it into two parts: the proper training set $z_1,\dots, z_m$ of size $m$ and the calibration set $z_{m+1},\dots,z_l$ of size $l-m$. For every test object $x_i$, compute the prediction set
        \begin{equation}\label{eq:defICP}
            \Gamma^{\eps}(z_1,\dots,z_{l}, x_i):=\left\{y\in \boldsymbol{Y}:\frac{|\{j=m+1,\dots,l:\alpha_j\geq\alpha_i\}|+1}{l-m+1} > \eps \right\},
        \end{equation}
        where the nonconformity scores are 
        \begin{equation}
            \begin{aligned}
                \alpha_j&:=A((z_1,\dots,z_m),z_j), & j=m+1,\dots,l,\\
                \alpha_i &:= A((z_1,\dots,z_m),(x_i,y)).
            \end{aligned}
        \end{equation}
        These are the most widely used conformal predictors in practice, but as always, there is a price to pay for the computational efficiency. ICPs are no longer exactly valid, but their validity property is of the probably approximately correct (PAC) type, with two parameters. For details on the training-conditional validity of ICPs, see \cite{pmlr-v25-vovk12}.
        \begin{remark}
            Vapnik \cite{vapnik1999nature} distinguished between induction and transduction, as applied to the prediction problem as follows. In inductive prediction, we start from some observed examples and extract some more or less general prediction rule, model or theory. This is the inductive step. For a test object, we derive a prediction from this general rule, which is the deductive step. Transductive prediction skips the inductive step and moves directly from observed examples to the prediction for a new object.
        \end{remark}

        As definition \ref{eq:defCP} makes clear, conformal predictors are confidence predictors. From now on, we shall write non-conformal confidence predictors (NCCP) whenever we refer to confidence predictors that are not conformal predictors. An inductive confidence predictor that is not an inductive conformal predictor will be denoted by INCCP. The term confidence predictor refers to any confidence predictor, whether conformal or not.
    
\section{Adaptive conformal inference}\label{sec:ACI}

    Adaptive conformal inference \cite{gibbs2021adaptive} has been suggested as a method to achieve asymptotic validity under non-exchangeable data. The idea is that instead of using the same significance level for all predictions, we use the online update
    \begin{equation}
        \label{eq:ACI}
        \varepsilon_{n+1} = \varepsilon_n + \gamma(\varepsilon - \err^{\varepsilon_n}_n)
    \end{equation}
    where $\gamma$ is the step size (or learning rate). In other words, if we made an error in the last step, we increase the significance level; otherwise, we decrease it. Gibbs and Cand\`es proved the following finite sample guarantees of ACI.
    \begin{equation}\label{eq:ACIGuarantee}
        \bigg|\varepsilon - \frac{1}{N}\sum_{i=1}^N\err_{n}^{\varepsilon_{n}}(\Gamma) \bigg| 
        \leq
        \frac{\max\{\varepsilon_1, 1-\varepsilon_1\} + \gamma}{\gamma N} ~(a.s),
    \end{equation}
    which in particular converges to 0 as $N\to\infty$, ensuring asymptotic validity.
    Note that if we want a certain absolute error deviation bound, $\delta > \frac{\max\{\varepsilon_1, 1-\varepsilon_1\}+1}{N}$, for a desired error rate $\eps$, with a finite sample size $N$, we can almost surely achieve it by choosing
    \begin{equation}\label{eq:gammaChoice}
        \gamma = \frac{\max\{\varepsilon_1, 1-\varepsilon_1\}}{\delta N - 1}.
    \end{equation}

    The update rule \eqref{eq:ACI} resembles a simple controller that increases or decreases the significance level to counteract deviations from the target error rate, $\varepsilon$. 
    Importantly, the iteration \eqref{eq:ACI} can cause $\eps_n\leq0$ or $\eps_n\geq1$. Technically, prediction sets are undefined for confidence predictors for $\eps\notin(0,1)$. Thus, we introduce a trivial extension that is compatible with ACI.
    \begin{definition}[Extended confidence predictor]
        \label{def:extConfPred}
        An extended confidence predictor is a confidence predictor as defined in definition \ref{def:confPred}, with the additional property that
        \begin{equation}
            \begin{aligned}
                \varepsilon \leq 0 &\implies \Gamma^{\varepsilon}(x_1,y_1,\dots,x_{n-1},y_{n-1},x_n) = \boldsymbol{Y}\\
                \varepsilon \geq 1 &\implies \Gamma^{\varepsilon}(x_1,y_1,\dots,x_{n-1},y_{n-1},x_n) = \emptyset.
            \end{aligned}
        \end{equation}
    \end{definition}
    Since the difference between confidence predictors and extended confidence predictors is minor, and since any confidence predictor can be trivially modified to be an extended confidence predictor, we will use the terms interchangeably. 

\section{The use of conformal predictors is not essential}\label{sec:nothingConformal}
    Interestingly, the proof of \eqref{eq:ACIGuarantee} does not use any property of a conformal, or even confidence predictor. In fact, all that is required of a predictor $\Gamma$ is that $\emptyset$ is predicted for $\eps\geq1$ and $\YY$ for $\eps\leq0$. For all other significance levels, even completely random prediction sets ensure \eqref{eq:ACIGuarantee}.    
    The result is proved by Lemma 4.1 in \cite{gibbs2021adaptive}, which we reproduce here with a notation slightly modified to align with ours.
    \begin{lemma}[Lemma 4.1 in \cite{gibbs2021adaptive}]\label{lemma:ACI}
        For all $t\in\mathbb{N}$, $\varepsilon_n\in[-\delta,1+\delta]$, almost surely, if ACI \eqref{eq:ACI} is used together with a set predictor that outputs $\emptyset$ for $\eps\geq1$ and $\YY$ for $\eps\leq0$.
    \end{lemma}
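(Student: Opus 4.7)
The plan is to prove the bound by induction on $n$, exploiting the boundary behaviour enforced by the extended predictor. The induction hypothesis is $\varepsilon_n\in[-\gamma,1+\gamma]$ (reading $\delta=\gamma$ for the step size of ACI), and the base case $\varepsilon_1\in(0,1)\subseteq[-\gamma,1+\gamma]$ is immediate. The inductive step amounts to a case split on the value of $\varepsilon_n$, in each case using the update rule \eqref{eq:ACI} and the fact that $\err_n^{\varepsilon_n}\in\{0,1\}$.

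In detail, I would distinguish three regimes. First, if $\varepsilon_n\leq 0$, the set predictor returns $\boldsymbol{Y}$, so $y_n\in\Gamma_n^{\varepsilon_n}$ and $\err_n^{\varepsilon_n}=0$ deterministically; thus $\varepsilon_{n+1}=\varepsilon_n+\gamma\varepsilon$, which is larger than $\varepsilon_n\geq-\gamma$ and at most $0+\gamma\varepsilon<1+\gamma$. Second, if $\varepsilon_n\geq 1$, the set predictor returns $\emptyset$, so $\err_n^{\varepsilon_n}=1$ deterministically, giving $\varepsilon_{n+1}=\varepsilon_n-\gamma(1-\varepsilon)$, which is smaller than $\varepsilon_n\leq 1+\gamma$ and at least $1-\gamma(1-\varepsilon)\geq 1-\gamma\geq-\gamma$. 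Third, if $\varepsilon_n\in(0,1)$, then regardless of whether an error occurs we have $\varepsilon_{n+1}\in\{\varepsilon_n+\gamma\varepsilon,\varepsilon_n+\gamma(\varepsilon-1)\}$; both endpoints lie in the open interval $(-\gamma,1+\gamma)$ since $\gamma\varepsilon<\gamma$ and $\gamma(1-\varepsilon)<\gamma$.

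Combining the three cases closes the induction and gives $\varepsilon_n\in[-\gamma,1+\gamma]$ for all $n\in\mathbb{N}$ on every sample path, hence almost surely. Note that the ``almost surely'' qualifier is inherited only from the possibility of randomisation in the predictor or the data sequence; the confinement itself is a deterministic, pathwise property of the update \eqref{eq:ACI} once the boundary convention of Definition \ref{def:extConfPred} is imposed.

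There is essentially no hard step: the entire argument is a two-line case analysis driven by the self-correcting behaviour of \eqref{eq:ACI}. The only subtlety worth flagging is conceptual rather than technical, namely that no property of $\Gamma$ beyond the boundary convention is ever invoked, which is precisely the observation the authors want to make in order to motivate Section \ref{sec:nothingConformal}: conformity, and even the nested-set structure of a confidence predictor, play no role in securing the ACI coverage guarantee.
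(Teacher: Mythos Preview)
Your proof is correct. The paper's own proof proceeds by contradiction rather than forward induction: it assumes $\inf_n\varepsilon_n<-\gamma$, observes that step sizes satisfy $|\varepsilon_{n+1}-\varepsilon_n|=\gamma|\varepsilon-\err_n^{\varepsilon_n}|<\gamma$, deduces that there must exist an $n$ with $\varepsilon_n<0$ yet $\varepsilon_{n+1}<\varepsilon_n$, and then derives a contradiction from the boundary rule (which forces $\err_n^{\varepsilon_n}=0$ and hence $\varepsilon_{n+1}\geq\varepsilon_n$). The upper boundary is handled symmetrically.

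The underlying mechanism is identical to yours --- the extended predictor's behaviour at $\varepsilon_n\leq 0$ and $\varepsilon_n\geq 1$ makes the update self-correcting --- so the difference is purely one of packaging. Your three-case induction is slightly more explicit and makes the pathwise, deterministic nature of the confinement more transparent (a point you rightly flag in your final paragraph). The paper's contradiction argument is terser but leaves the reader to reconstruct why the offending index $n$ must exist. Either route is perfectly adequate for a result of this weight, and both make equally clear that no conformal structure is used.
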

    \begin{proof}
        Assume that the sequence $\{\varepsilon_n\}_{t\in\mathbb{N}}$ is such that $\inf_n\varepsilon_n<-\gamma$ (the case when $\sup_n\varepsilon_n>1+\gamma$ is identical). By \eqref{eq:ACI}, $\sup_n|\varepsilon_{n+1}-\varepsilon_n| = \sup_n\gamma|\varepsilon-\err_n^{\varepsilon_n}|<\gamma$. Thus, with positive probability, we may find $t\in\mathbb{N}$ such that $\varepsilon_n<0$ and $\varepsilon_{n+1}<\varepsilon_n$. However, by assumption, and \eqref{eq:ACI},
        \begin{equation}
            \varepsilon_n<0 \implies \Gamma_n^{\varepsilon_n} = \boldsymbol{Y} \implies \err_n^{\varepsilon_n} = 0 \implies \varepsilon_{n+1} = \varepsilon_n + \gamma(\varepsilon - \err_n^{\varepsilon_n}) \geq \varepsilon_n.
        \end{equation}
        We have reached a contradiction.
    \end{proof}
    It is clear that no property of conformal, or even confidence predictors, is required to achieve \eqref{eq:ACIGuarantee}. As Lemma \ref{lemma:ACI} and its proof demonstrate, the finite-sample guarantee \eqref{eq:ACIGuarantee} holds, provided that the predictor respects the boundaries ($\emptyset$ for $\varepsilon\geq1$ and $\boldsymbol{Y}$ for $\varepsilon\leq0$), and the significance level is updated via \eqref{eq:ACI}. This guarantee arises solely from the behaviour of the iterative error control loop.

\subsection{Why nested prediction sets matter}\label{sec:whyConfidencePredictors}
    This section argues that while not strictly necessary for achieving the finite sample guarantee \eqref{eq:ACIGuarantee} of ACI, we should restrict ourselves to confidence predictors, as this is the most general type of predictor that satisfies natural desiderata for prediction sets at a confidence level.
    
    \subsubsection{Validity is easy without nested prediction sets}

        We have seen that the same guarantee holds for ACI used with a predictor that outputs random subsets of $\YY$ for $\eps\in(0, 1)$. This predictor is not a confidence predictor because we have dropped the requirement for nested prediction sets. However, if we are willing to dispense with nested prediction sets, we could do even better. The coin-flip predictor outputs either $\emptyset$ or $\YY$ for $\eps\in(0, 1)$. Which one is determined by flipping a biased coin.
        \begin{definition}[coin-flip predictor]\label{def:coinFlipPredictor}
            A coin-flip predictor is a set predictor that for each significance level $\varepsilon\in(0,1)$ outputs the prediction set
            \begin{equation}
                \Gamma_n^{\varepsilon} := 
                \begin{cases}
                    \emptyset, & \text{with probability $\epsilon$} \\
                    \boldsymbol{Y} & \text{with probability $1-\epsilon$}.
                \end{cases}
            \end{equation}
        \end{definition}
        The point is that the coin-flip predictor is exactly valid. Its error sequence is the realisation of a sequence of independent Bernoulli variables with parameter $\varepsilon$ for each $\varepsilon\in(0,1)$. However, this is clearly not very informative, and we should be hesitant to use it in practice.

    \subsubsection{Conflicting predictions}

        We argue that if we want to predict sets with confidence, the very least we can ask for is a confidence predictor whose prediction sets are nested, as illustrated in Figure \ref{fig:nestedSets}. 
        Imagine a model tasked with diagnosing patients based on symptoms, such as cough, shortness of breath, fever, and chest pain. The model outputs prediction sets at confidence levels $0.7, 0.9$ and $0.99$, and the possible diagnoses are 
        \begin{itemize}
            \item 
            Healthy (no diagnosis)
            \item 
            Pneumonia,
            \item
            Bronchitis,
            \item 
            Asthma,
            \item 
            Chronic Obstructive Pulmonary Disease (COPD).
        \end{itemize}
        Suppose that, for some patient, the model outputs the prediction sets in Table \ref{tab:non-nested}.
        \begin{table}[h!]
            \centering
            \begin{tabular}{cc}
                \hline
               Confidence   & Prediction set \\
               \hline
                0.99        & \{Healthy, Pneumonia, Bronchitis, Asthma\}\\
                0.9         & \{Pneumonia, Bronchitis, Asthma\}\\
                0.7         & \{Healthy, COPD\}
            \end{tabular}
            \caption{A model that predicts with confidence, but does not have nested prediction sets.}
            \label{tab:non-nested}
        \end{table}
        
        The presented scenario illustrates a logical inconsistency in the model predictions. While asserting 90\% confidence that the patient does not have a healthy diagnosis, the model simultaneously includes "Healthy" within its 99\% confidence prediction set. Furthermore, at the 70\% confidence level, "Healthy" reappears alongside "COPD." However, as the confidence level increases to 90\% and subsequently 99\%, "COPD" is excluded. Given the potentially serious nature of these diagnoses, conflicting information from the model is problematic. Consequently, to ensure meaningful interpretation of confidence levels, the property of nested prediction sets is essential. Thus, we argue that if we are to mean anything reasonable by stating a confidence level, we simply must require that the prediction sets are nested.
        Therefore, while ACI does not strictly require it, its usage should be restricted to extended confidence predictors in the sense of Definition \ref{def:extConfPred}. However, this raises several  empirical questions.
        
    \subsection{Do we gain anything from conformal predictors?}
    
        We have shown that ACI achieves the finite sample guarantee \eqref{eq:ACIGuarantee} for non-exchangeable data, even for non-confidence predictors, but we have argued that it would be absurd to drop the property of nested prediction sets. A natural question is whether using ACI with a CP has any advantages over using it together with an NCCP. For non-exchangeable data, CP is not necessarily valid, which is also true for NCCP. 

        The observation that ACI achieves the same finite sample guarantee for any confidence predictor, be it CP or NCCP, raises at least two questions regarding its application. 
        \begin{enumerate}
            \item 
            In the online setting, can we use ACI with an NCCP instead of a CP? The validity guarantee of CP is lost anyway, so the question is whether CP is more efficient (in the sense of outputting smaller prediciton sets) than NCCP.
            \item 
            In the offline setting, ICP requires sacrificing some training data for calibration. If the data are not exchangeable, does it even help to do so? Since the ACI finite sample coverage guarantee holds regardless, might it be better to use all available data for training and NCCP to be as efficient as possible?
        \end{enumerate}
        We now proceed with our experiments, with the aim to answer these questions.

\section{Experimental setup}\label{sec:experimentSetup}
    Attempting to answer the questions raised by the observation that ACI achieves the same guarantee \eqref{eq:ACIGuarantee} for NCCP as for CP, we conduct four numerical experiments on publicly available datasets, known to be nonexchangeable.

    \begin{itemize}
        \item 
        The Wine Quality dataset is publicly available in the UCI Machine Learning repository \cite{wine_quality_186}. It consists of 11 features that may be useful for predicting the quality of wine, which is encoded as an integer between 3 and 9, inclusive. Most labels are between 4 and 7. There are 6497 examples in total, 4898 white and 1599 red. The exchangeability of the wine dataset was studied by \cite{pmlr-v152-vovk21b}, where it was shown that red and white wines are not exchangeable. 
        \item 
        The US Postal Service (USPS) dataset consists of 7291 training images and 2007 test images of handwritten digits from 0 to 9. The images are $16\times16$ grayscale pixels. The exchangeability of the USPS dataset was studied in \cite{vovk2003testing} and\cite{fedorova2012plug}. It is well known that the examples are not perfectly exchangeable. 
    \end{itemize}

    \subsection{Online experiments}
        Our online experiments attempt to answer the question of whether CP is more efficient than NCCP when used together with ACI in the online setting for nonexchangeable data. Recall that the ACI guarantee \eqref{eq:ACIGuarantee} holds for both CP and NCCP. 

        \subsubsection{Online classification}
            We process the USPS dataset in the original order (which is known to be non-exchangeable) using the first 100 examples as the initial training set. The desired error rate is $\varepsilon=0.1$. We use the following two confidence predictors:
            \begin{itemize}
                \item 
                \textbf{Nearest Neighbours Conformal Predictor (CP)}. The $k$-nearest neighbours CP is specified by the nonconformity measure
                \begin{equation}
                    A(\sigma, (x,y)) = \frac{\frac{1}{k} \sum_{x_i \in N_k(x, \sigma): y_i = y} d(x, x_i)}{\frac{1}{k} \sum_{x_j \in N_k(x, \sigma): y_j \neq y} d(x, x_j)}
                \end{equation}
                where $\sigma = \Lbag (x_1,y_1),\dots,(x_{n},y_{n})\Rbag \backslash \Lbag (x,y)\Rbag$, $d$ is the Euclidean distance, and $N_k(x, \sigma)$ denotes the set of the $k$ nearest neighbours of $x$ in $\sigma$ according to $d$ (or more generally, any distance). 
                \item 
                \textbf{Nearest Neighbours Confidence Predictor (NCCP)}. The $k$-nearest neighbours confidence predictor uses the confidence score $S_k(x, y) = N_k(x, y)/k$ to construct prediction sets. Here, $N_k(x, y)$ is the number of examples among the $k$ nearest neighbours of $x$ with label $y$. The prediction set is constructed using confidence thresholding: $\Gamma^{\varepsilon}(x) = \{y\in\YY : S_k(x, y) > \varepsilon\}$.
            \end{itemize}            
            With 9198 test examples, and $\varepsilon_1=\varepsilon$ for simplicity, we aim for an absolute error deviation bound $\delta = 0.01$, which, according to \eqref{eq:gammaChoice} means that we should choose the step size $\gamma \approx 0.001$.

            The $k$-nearest neighbours predictor produces prediction sets by finding the $k$ nearest neighbours of the test point $x_n$ in the training data, which can be performed in time $\mathcal{O}(n)$. For the $k$-nearest neighbours conformal predictor, nonconformity scores must be computed for each example in the bag, which takes time $\mathcal{O}(n^2)$, followed by computing the p-values for each potential label which takes time $\mathcal{O}(n)$ per label. 

        \subsubsection{Online regression}\label{sec:onlineRegressionSetup}
            Predicting the labels in the wine quality dataset can be seen as either regression or classification. We choose to perform the regression task again with the desired error rate $\eps=0.1$ and using the first 100 examples as the initial training set. With 6397 test examples and $\varepsilon_1=\varepsilon$ for simplicity, we can achieve a guaranteed absolute coverage deviation $\delta = 0.01$ if we choose $\gamma \approx 0.014$. We use the following confidence predictors:
            \begin{itemize}
                \item 
                \textbf{Conformalized Least Squares (CP)}. We use the conformalized ridge regression algorithm \cite{alrw2} with ridge parameter set to 0 (which corresponds to least squares). The conformalised ridge regression algorithm is described in Algorithm \ref{alg:crr}.
                \item 
                \textbf{Least Squares Condfidence Predictor (NCCP)}. The ordinary least-squares algorithm can output prediction intervals natively. We implement an online ordinary least squares algorithm and output the least squares prediction intervals. The resulting confidence predictor is described in \ref{alg:ridgeConfPred}.
            \end{itemize}
            The least-squares method goes back to Gauss and Legendre and is a widely used regression method. Ridge regression is a generalisation of the basic procedure, dating back to the 1960s, which introduces a non-negative ridge parameter $a$ (setting $a=0$ recovers ordinary least squares). In matrix form, it can be represented as
            \begin{equation}
                \label{RR}
                \omega = (X_n^TX_n+aI_p)^{-1}X_n^TY_n,
            \end{equation}
            where $Y_n := (y_1,\dots,y_n)^T$ and $X_n = (x_1,\dots,x_n)^T$, and the ridge regression prediction for an object $x$ is $\hat{y}:=x^T\omega$.  
            
            From Eq. \eqref{RR} we see that the predictions $\hat{y}$ for all objects $x_i$ are given by
            \begin{equation}
                \hat{Y}_n:=(\hat{y}_1,\dots,\hat{y}_n)^T = X_n(X_n^TX_n+aI_p)^{-1}X_n^TY_n,
            \end{equation}
            or if we define the hat matrix (because it maps $y_i$ to its “hatted” form)
            \begin{equation}
                \label{hatMatrix}
                H_n = X_n(X_n^TX_n+aI_p)^{-1}X_n^T,
            \end{equation}
            we can write $\hat{y} = H_nY_n$.
        
            The conformalised ridge regression algorithm (CRR) is a combination of two algorithms: lower CRR and upper CRR, which produce lower and upper bounds, respectively. For the upper CRR, we use the nonconformity measure $y-\hat{y}$, and for the lower CRR, we use $\hat{y}-y$, where $\hat{y}$ is the ridge regression prediction of $y$. The complete algorithm is shown in Algorithm \ref{alg:crr}. 
                            
            \begin{algorithm}
                \caption{Conformalised ridge regression (Alg. 2.4 \cite{alrw2})}\label{alg:crr}
                \begin{algorithmic}
                    \Require Ridge parameter $a\geq0$, significance level $\varepsilon\in(0,1)$, training set $(x_i,y_i)\in\mathbb{R}^p\times\mathbb{R}, i=1,\dots n-1$ and a test object $x_n\in\mathbb{R^p}$.
                    \State set $C=I_n-H_n$, $H_n$ being defined in Eq. \eqref{hatMatrix}
                    \State set $A = (a_1,\dots,a_n)^T:=C(y_1,\dots,y_{n-1},0)^T$
                    \State set $B = (b_1,\dots,b_n)^T:=C(0,\dots,0,1)^T$
                    \For{$i=1,\dots,n-1$}
                    \If{$b_n > b_i$} 
                    \State set $u_i:=l_i:=(a_i-a_n)/(b_n-b_i)$
                    \Else 
                    \State set $l_i=-\infty$ and $u_i=\infty$
                    \EndIf
                    \EndFor
                    \State sort $u_1,\dots,u_{n-1}$ in the ascending order obtaining $u_{(1)}\leq\dots\leq u_{(n-1)}$
                    \State sort $l_1,\dots,l_{n-1}$ in the ascending order obtaining $l_{(1)}\leq\dots\leq l_{(n-1)}$
                    \State output $[l_{(\lfloor (\varepsilon/2)n \rfloor)}, u_{(\lceil (1-\varepsilon/2)n \rceil)}]$ as prediction set.
                \end{algorithmic}
            \end{algorithm}
            In our experiments on synthetic data, we trivially extend Algorithm \ref{alg:crr} by requiring the output to be $(-\infty,\infty)$ for $\eps\leq0$ and $\emptyset$ for $\eps\geq1$, resulting in an extended conformal predictor.
            It was shown in \cite{alrw1} that the prediction sets can be computed in $\mathcal{O}(n\ln n)$ in the online mode. Computing $A$ and $B$ can be done in time $\mathcal{O}(n)$, and sorting can be done in time $\mathcal{O}(n\ln n)$.
        
            The confidence predictor based on ridge regression is summarised in Algorithm \ref{alg:ridgeConfPred}, where $t_{\eps/2,n-1-p}$ is the critical value from the Student's $t$-distribution with $n-1-p$ degrees of freedom.  
            
            \begin{algorithm}
                \caption{Ridge confidence predictor}\label{alg:ridgeConfPred}
                \begin{algorithmic}
                    \Require Ridge parameter $a\geq0$, significance level $\varepsilon\in(0,1)$, training set $(x_i,y_i)\in\mathbb{R}^p\times\mathbb{R}, i=1,\dots n-1$ and a test object $x_n\in\mathbb{R^p}$.
                    \State set $\hat{y}_n = x_n^T(X_{n-1}^TX_{n-1} + aI_p)^{-1}Y_{n-1}$
                    \State set $C = I_{n-1} - H_{n-1} = (c_1,\dots,c_{n-1})$, $H_n$ being defined in Eq. \eqref{hatMatrix}
                    \State set $\sigma^2 = \frac{1}{n - 1 -p}\sum_{i=1}^{n-1}c_i^2$
                    \State output $[\hat{y}_n - t_{\eps/2, n-1-p}, [\hat{y}_n  t_{\eps/2, n-1-p}]$ as prediction set.
                \end{algorithmic}
            \end{algorithm}
            Because Algorithm \ref{alg:ridgeConfPred} avoids the sorting required in Algorithm \ref{alg:crr}, it can be computed in $\mathcal{O}(n).$

    \subsection{Offline experiments}
        In the offline setting, there are no obvious computational gains from switching from ICP to INCCP. However, one might suspect that using all available data for training, rather than sacrificing some of them for calibration, could lead to some improvements. Under exchangeability, setting aside a calibration set is motivated because it guarantees validity. This is no longer the case for nonexchangeable data. Our offline experiments aims to investigate the effect of the calibration set size for ICP and comparing their efficiency (informally tight prediction sets) to that of INCCP, which requires no calibration. 

        \subsubsection{Offline classification}
            Again, we use the USPS dataset, but this time we use the 7291 training examples to train the following inductive confidence predictors.
            \begin{enumerate}
                \item 
                \textbf{Random forest inductive conformal predictor (ICP)}. This is a standard ICP defined using the \texttt{crepes} python package \citep{crepes}. The underlying predictor is the \texttt{RandomForestClassifier} from the \texttt{scikit-learn} python package \citep{scikit-learn} with default parameters; for example, the forest consists of 100 trees.
                \item 
                \textbf{Random forest inductive confidence predictor (INCCP)}. Based on the same \texttt{RandomForestClassifier}, the INCCP uses confidence thresholding using \\
                \texttt{predict\_proba}, computed by averaging the class probabilities predicted by each individual decision tree in the forest as confidence scores.
            \end{enumerate}
            Since the ACI update \eqref{eq:ACI} is inherently an online procedure, we predict the label of one test object at a time, observe its label, and update the significance level. However, we do not learn the example but use the same prediction rule for the entire test set. We run 1000 independent experiments using different random splits for the proper training and calibration sets. The random seed of each run is also passed to the \texttt{RandomForestClassifier}. For each seed, we run 99 experiments with calibration set sizes ranging from 1\% to 99\% of the training data, increasing in 1\% increments. The NCCP is run only once per random seed because it does not require any calibration set at all.

            With 2007 test examples, and  again choosing $\varepsilon_1=\varepsilon$, we aim for an absolute error deviation bound $\delta = 0.01$, which, according to \eqref{eq:gammaChoice} means that we should choose the step size $\gamma \approx 0.047$.

        \subsubsection{Offline regression}
            We use 4898 white wines as the training set and 1599 red wines as the test set to train the following confidence predictors.
            \begin{itemize}
                \item 
                \textbf{Random forest inductive conformal predictor (ICP)}. We use the \\
                \texttt{RandomForestQuantileRegressor} with default parameter from \cite{Johnson2024} as underlying predictor, which we turn into an ICP using \texttt{crepes}.
                \item 
                \textbf{Quantile forest confidence predictor (INCCP)}. The \\
                \texttt{RandomForestQuantileRegressor} is itself a confidence predictor, as it is shown in \cite{Johnson2024} that the predicted quantiles are monotinc by construction. Prediction intervals are thus constructed by setting the predicted $\varepsilon_n/2$ and $1-\varepsilon_n/2$ quantiles as endpoints.
            \end{itemize}
            The experiments are conducted using different calibration set sizes in the same manner as described for the offline classification experiment. Because we have 1599 test examples, again with $\varepsilon_1=\varepsilon$ and $\delta=0.01$, we take $\gamma \approx 0.06$ according to \eqref{eq:gammaChoice}.

    \subsection{Evaluation}\label{sec:evaluation}
        Efficiency criteria for CP often focus on the set size and/or the size of the p-values. This is possible because CP prediction sets are known to be valid under the assumption of exchangeability. Thus, the only consideration for evaluation is efficiency. Informally, an efficient CP tends to produce small prediction sets. From the perspective of conformal transducers, an efficient conformal transducer tends to output small p-values. For a full discussion of CP efficiency criteria, see \cite{vovk2016criteria} and Chapter 3 in \cite{alrw2}.
        
        For general confidence predictors (including CP for non-exchangeable data), validity cannot be assumed, which complicates the evaluation. In this setting, the desiderata are both calibration and efficiency. Informally, the prediction sets should be small and contain the true label with a probability approximately equal to the significance level $\varepsilon$ for all $\varepsilon\in(0,1)$. Crucially, proper scoring rules \cite{gneiting2007strictly} are designed such that their expected value is optimised when the forecaster's predictions perfectly match the true underlying distribution, thereby providing an incentive for honest and accurate uncertainty quantification. When we cannot expect validity, such as for non-exchangeable data, we therefore prefer to evaluate methods using proper scoring rules.

        For all experiments, we also report the empirical error rate to ensure that the ACI absolute error deviation bound \eqref{eq:ACIGuarantee} holds.

        \subsubsection{Regression}
            For interval predictions, where the output is the $1-\varepsilon$ prediction interval $[l, u]$, a popular proper scoring rule is the Winkler interval score, or just interval score \cite{winkler1972decision}
            \begin{equation}
                \label{eq:WinklerIntervalScore}
                S_{\varepsilon}^{\text{int}}(l, u, y) = (u-l) + \frac{2}{\varepsilon}(l-y)\mathds{1}_{\{y<l\}} + \frac{2}{\varepsilon}(y-u)\mathds{1}_{\{u<y\}}.
            \end{equation}
            The Winkler score is a proper scoring rule. It penalises intervals and incurs a penalty for misscoverage that depends on the significance level $\varepsilon$. Note that, assuming $\Gamma^{\varepsilon}$ is an interval, the Winkler interval score can be written
            \begin{equation}
                \label{eq:WinklerIntervalScoreGamma}
                S_{\varepsilon}^{\text{int}}(\Gamma_{\varepsilon}, y) = |\Gamma^{\varepsilon}| + \frac{2d(\Gamma^{\varepsilon}, y)}{\varepsilon},
            \end{equation}
            where $d(A, \xi) = \inf\{d(\xi, a):{a\in A}\}$ denotes the Euclidean distance from the point $\xi\in\mathbb{R}$ to the set $A\subset\mathbb{R}$.
    
            The mean Winkler interval score is the mean value of the individual interval scores. Smaller values indicate better performance. Since there is a possibility of outputting infinite intervals, we compute interval scores only for finite prediction intervals and report the fraction of infinite intervals separately. 

            Finally, the average width of the intervals is also reported because it is readily interpretable. When the exchangeability assumption is violated, we should prefer to evaluate using proper scoring rules, such as the Winkler score.
        
        \subsubsection{Classification}
        
            Unfortunately, to the best of our knowledge, no proper scoring rules for prediction sets in classification tasks exist in the literature. 
            Thus, in the classification setting, we report the observed excess (OE) of the prediction sets \cite{alrw2}, which is defined as
            \begin{equation}\label{eq:OE}
                OE(\Gamma, \eps) := \frac{1}{n}\sum_{i=1}^n|\Gamma^{\eps}_i\backslash\{y_i\},
            \end{equation}
            that is, the average number of false labels included in the prediction sets at significance level $\eps$. In contrast to the set size criterion, OE never penalises the correct label, which aligns with the preference for small, well-calibrated prediction sets. It is clear that smaller values are preferable. It was shown in \cite{alrw2} that the observed excess is a conditionally proper efficiency criterion, which in essence means that the true conditional probability of the label, given the object, is always an optimal conformity score with respect to the conditionally proper efficiency criterion. In this sense, a conditionally proper efficiency criterion is analogous to a proper scoring rule. However, as all efficiency criteria for CP, it assumes validity, which makes it less than ideal for us. Nevertheless, lacking proper scoring rules for prediction sets in classification, we settle for the OE, with a slight modification to account for the ACI update \eqref{eq:ACI}. We want to achieve the finite-sample guarantee \eqref{eq:ACIGuarantee} with a target error rate $\eps$. Then 
            \begin{equation}\label{eq:OEACI}
                OE_{ACI}(\Gamma, \eps):= \frac{1}{n}\sum_{i=1}^n|\Gamma^{\eps_i}_i\backslash\{y_i\},
            \end{equation}
            The difference is that the prediction set at step $i$ is computed with the significance level $\eps_i$. This is a natural modification, and we will still refer to it as the OE. 

\section{Results}\label{sec:results}

    \subsection{Online classification}
        The cumulative results of online classification on the USPS dataset are shown in Figure \ref{fig:onlineUSPS}, and the mean results are listed in Table \ref{tab:onlineUSPS}. The first thing to note is that both CP and NCCP achieve the ACI finite sample guarantee \eqref{eq:ACIGuarantee}. Recall that we choose $\gamma \approx 0.001$ and $\varepsilon_1 = \varepsilon=0.1$ to obtain the bound $\delta=0.01$. From Figure \ref{fig:onlineUSPSOE}, it is clear that CP is somewhat more efficient in terms of OE, which is also reflected in the fact that NCCP is slightly conservative, as shown in Figure \ref{fig:onlineUSPSerr}. However, in Table \ref{tab:onlineUSPS}, we see that the average OE for both methods is small and quite similar. 
        \begin{figure}[h!]
            \centering
            \begin{subfigure}[b]{0.48\linewidth}
                \includegraphics[width=\linewidth]{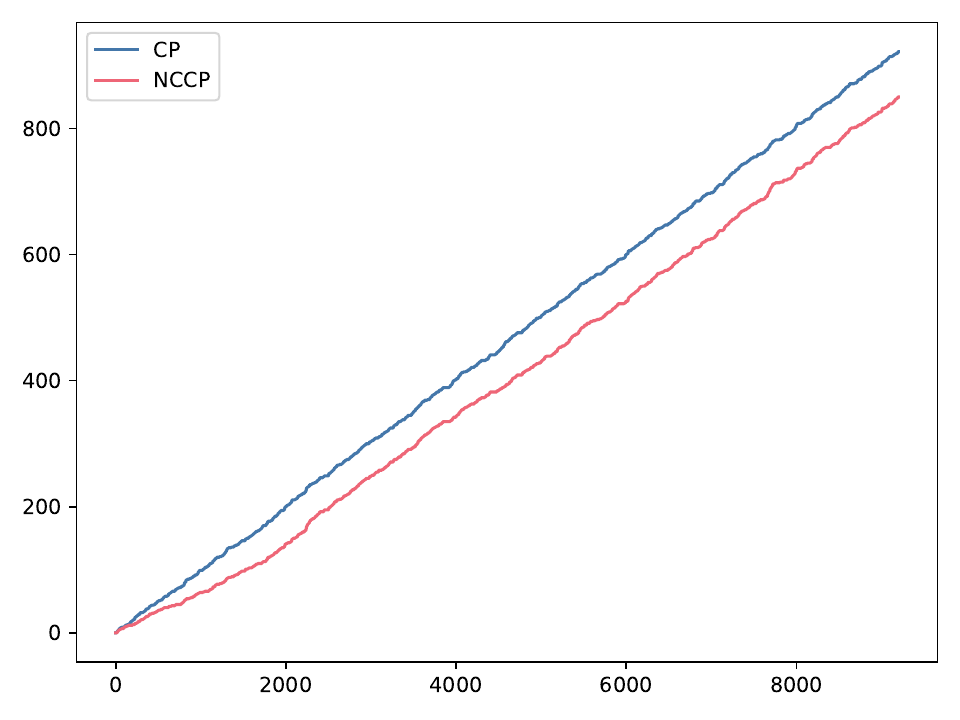}
                \caption{Cumulative error for CP and NCCP}
                \label{fig:onlineUSPSerr}
            \end{subfigure}
            \hfill 
            \begin{subfigure}[b]{0.48\linewidth}
                \includegraphics[width=\linewidth]{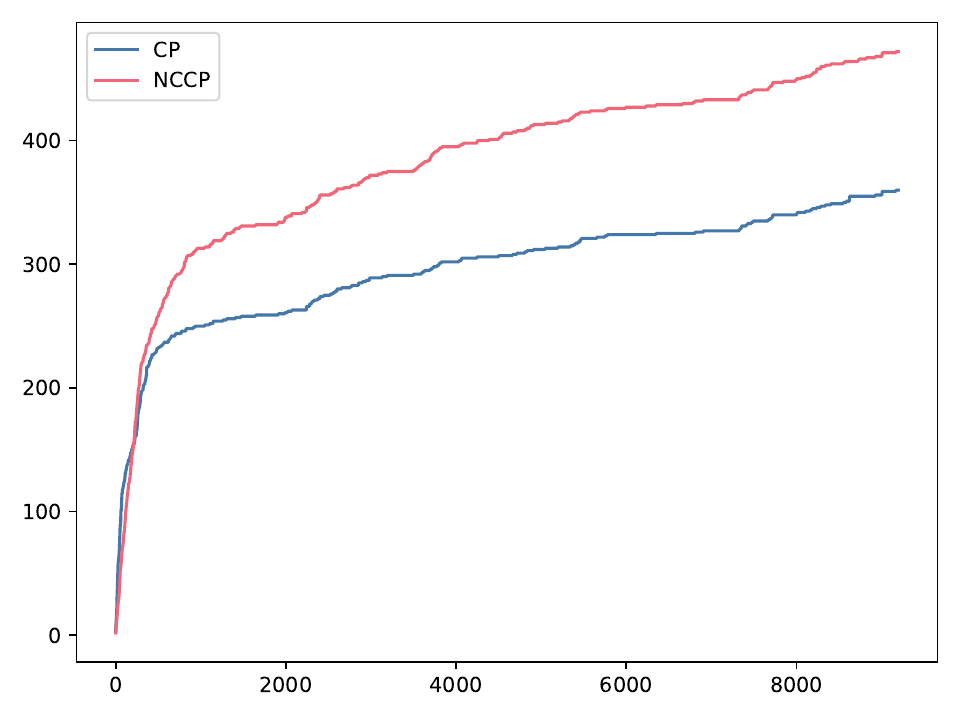}
                \caption{Cumulative observed excess (OE) for CP and NCCP}
                \label{fig:onlineUSPSOE}
            \end{subfigure}
            \caption{Cumulative results for CP and NCCP for online processing of the USPS dataset.}
            \label{fig:onlineUSPS}
        \end{figure}
        \begin{table}[h!]
            \centering
            \begin{tabular}{c|cc}
                Method & err & OE  \\
                \hline
                CP & 0.100239 & 0.039139 \\
                NCCP & 0.092411 & 0.051316
            \end{tabular}
            \caption{Mean error (err) and observed excess (OE) for CP and NCCP for online processing of the USPS dataset.}
            \label{tab:onlineUSPS}
        \end{table}

    \subsection{Online regression}
        The cumulative results of the online regression on the Wine dataset are shown in Figure \ref{fig:onlineWine} and Table \ref{tab:onlineWine}, where we also report the fraction of infinite intervals output by each method. Again, it should be noted that the finite-sample ACI guarantee \eqref{eq:ACIGuarantee} is achieved by both methods. We chose $\gamma \approx 0.014$ and $\varepsilon_1=\varepsilon$ to achieve the bound $\delta=0.01$. Interestingly, the cumulative error and Winkler score in Figures \ref{fig:onlineWineErr} and \ref{fig:onlineWineWinkler} are basically indistinguishable, but there are some small deviations in the cumulative width in Figure \ref{fig:onlineWineWidth}, which indicates that when the NCCP intervals are wider than those of CP, this is needed for coverage. If two intervals have the same Winkler score but one is wider, this means that the narrow interval does not cover the true label (see \eqref{eq:WinklerIntervalScoreGamma}). The average error, Winkler score, and width, as well as the fraction of infinite intervals, are listed in Table \ref{tab:onlineWine}. It is clear that NCCP is slightly better, as measured by the Winkler score, but also that CP outputs fewer infinite intervals, which is an advantage. Because the mean Winkler score is computed considering only finite intervals, it is not obvious which method is better, and in any case, the difference is small by any metric. Apart from the computational cost discussed in Section \ref{sec:onlineRegressionSetup}, our results show little difference between CP and NCCP for the Wine dataset.
        \begin{figure}[h!]
            \centering
            \begin{subfigure}[b]{0.48\linewidth}
                \includegraphics[width=\linewidth]{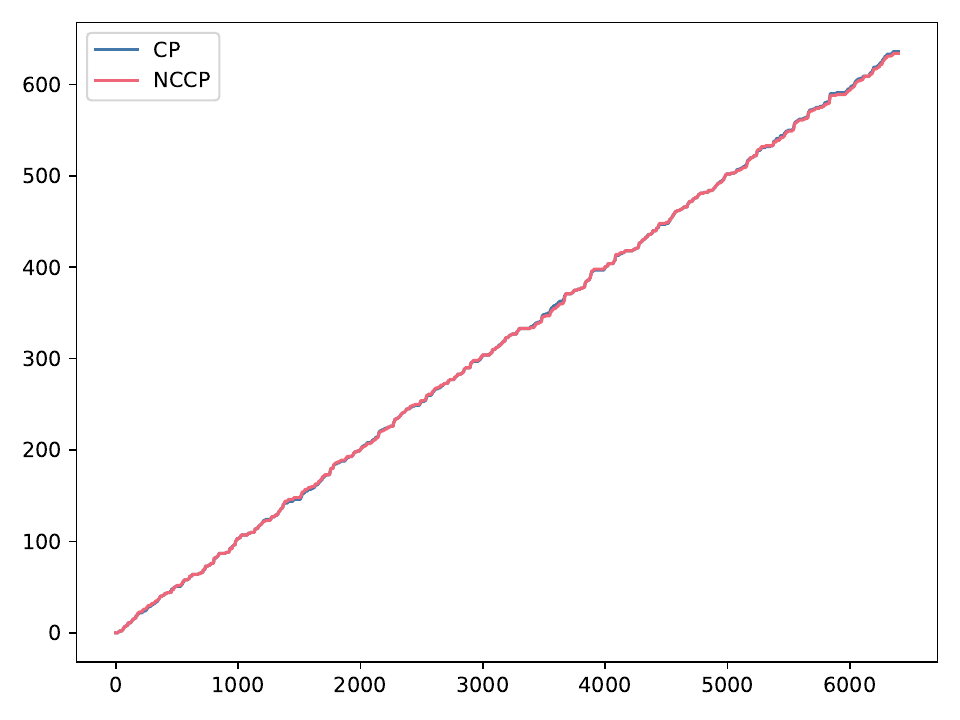}
                \caption{Cumulative error for CP and NCCP}
                \label{fig:onlineWineErr}
            \end{subfigure}
            \hfill 
            \begin{subfigure}[b]{0.48\linewidth}
                \includegraphics[width=\linewidth]{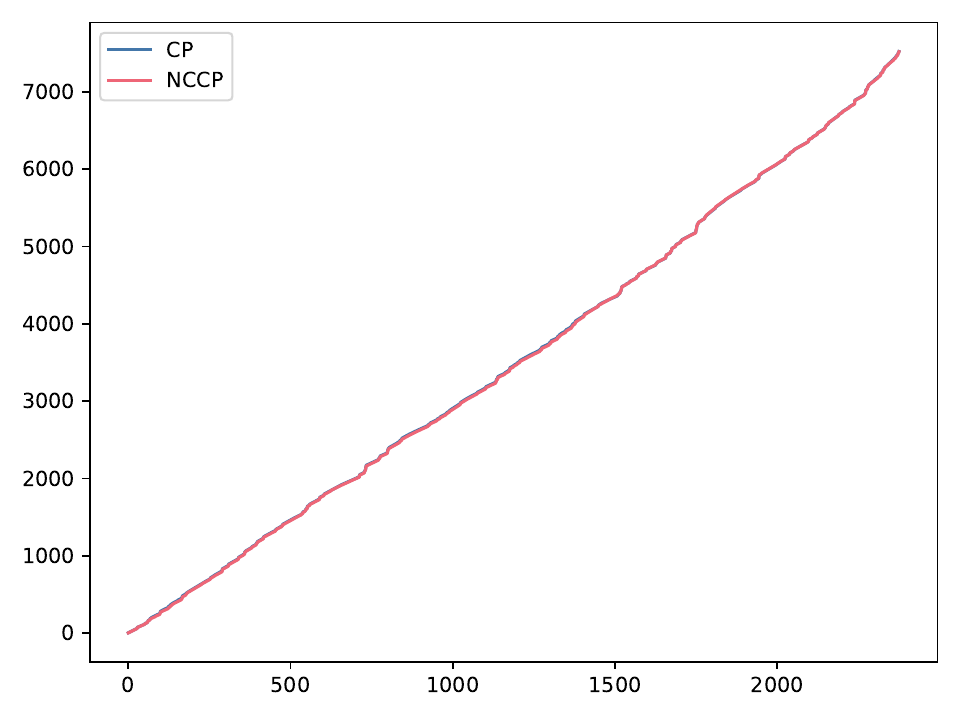}
                \caption{Cumulative Winkler score for CP and NCCP}
                \label{fig:onlineWineWinkler}
            \end{subfigure}
        
            \bigskip 
        
            \begin{subfigure}[b]{\linewidth} 
                \centering
                \includegraphics[width=0.48\linewidth]{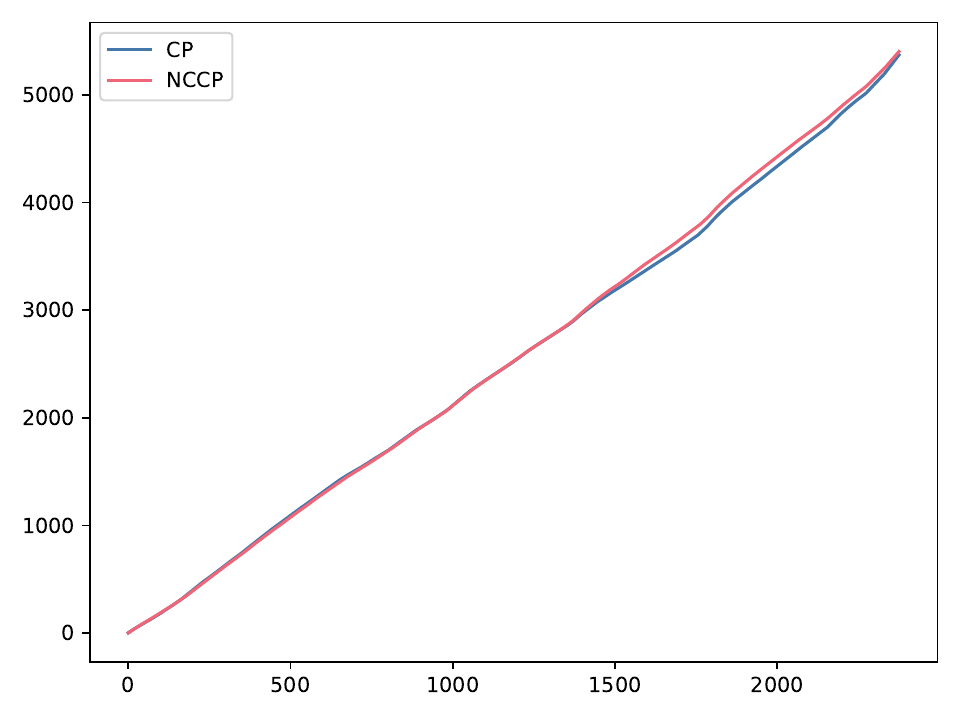}
                \caption{Cumulative width for CP and NCCP}
                \label{fig:onlineWineWidth}
            \end{subfigure}
            \caption{Cumulative results for CP and NCCP for online processing of the Wine dataset. Note that infinite intervals are not included in the cumulative sums.}
            \label{fig:onlineWine}
        \end{figure}
        \begin{table}[h!]
            \centering
            \begin{tabular}{c|cccc}
                Method & err & Winkler & Width & frac inf \\
                \hline
                CP & 0.099422 & 3.243817 & 2.400532 & 0.000782 \\
                NCCP & 0.099109 & 3.235151 & 2.400845 & 0.002189
            \end{tabular}
            \caption{Mean error (err), Winkler score, width and the fraction of intervals with infinite width for CP and NCCP for online processing of the Wine dataset. Note that the mean is computed for the finite intervals.}
            \label{tab:onlineWine}
        \end{table}

    \subsection{Offline classification}

        The results of the offline classification experiment on the USPS dataset are shown in Figure \ref{fig:offlineCalsizeUSPS}, where the mean OE and error over 1000 independent trials are shown together with a 95\% confidence interval. The ICP results are plotted as a function of the calibration set size, which ranges from 1\% to 99\%. As usual, the first thing to note, is that both methods achieve the finite-sample ACI guarantee \eqref{eq:ACIGuarantee} on average (see Figure \ref{fig:offlineUSPSErr}, as well as for each individual trial.

        As shown in Figure \ref{fig:offlineUSPSOE}, the ICP achieves a rather stable average OE of approximately 0.6 for most calibration set sizes, which means that most prediction sets contain excessive labels (i.e. labels other than the true one). On the other hand, INCCP achieves an average OE of approximately 0.05, meaning that there are very few incorrect labels in the prediction sets, and there is little variation over the trials, as the training set remains unchanged because no calibration set is required. Thus, the only source of randomness is the random seed in the \texttt{RandomForestClassifier}. For ICP, on the other hand, we use a different random split for each seed to determine the proper training set, which introduces more variability. The performance of the ICP deteriorates towards both ends of the plot in Figure \ref{fig:offlineUSPSOE}. For very small calibration set sizes, the resolution of the p-values becomes poor, and small changes in the significance level will have little effect. On the other hand, if almost all training data are used for calibration, the prediction rule (the trained \texttt{RandomForestClassifier} is likely to be very poor. In summary, ICCP outperforms ICP for all calibration set sizes in terms of OE for the USPS dataset.
        
        \begin{figure}[h!]
            \centering
            \begin{subfigure}[b]{0.48\linewidth}
                \includegraphics[width=\linewidth]{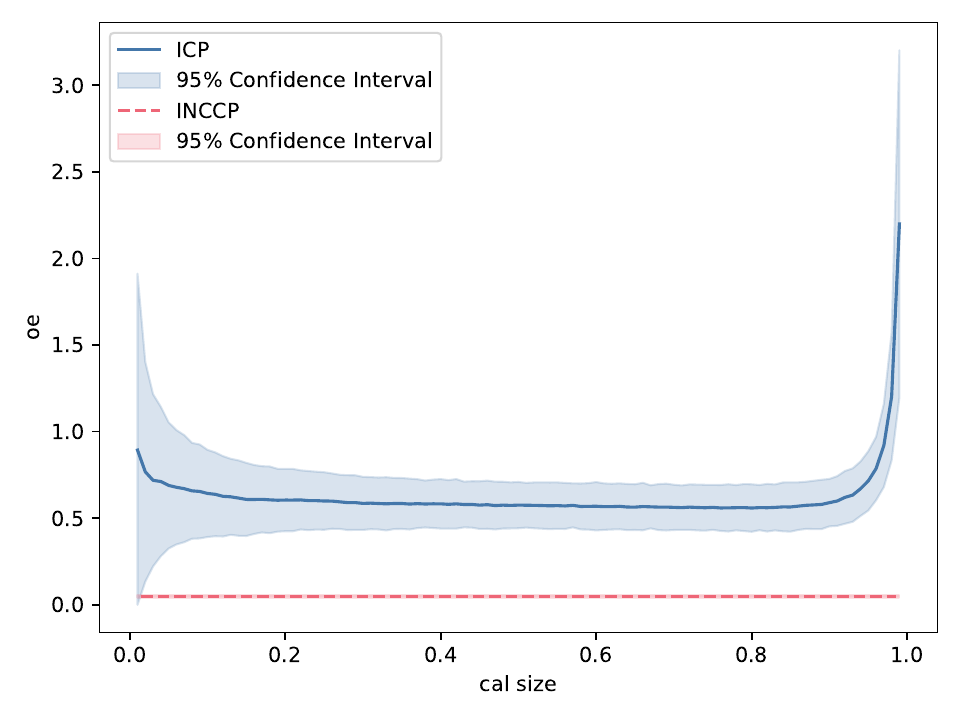}
                \caption{Observed excess (OE) over calibration set sizes for ICP and INCCP (which has no calibration set).}
                \label{fig:offlineUSPSOE}
            \end{subfigure}
            \hfill 
            \begin{subfigure}[b]{0.48\linewidth}
                \includegraphics[width=\linewidth]{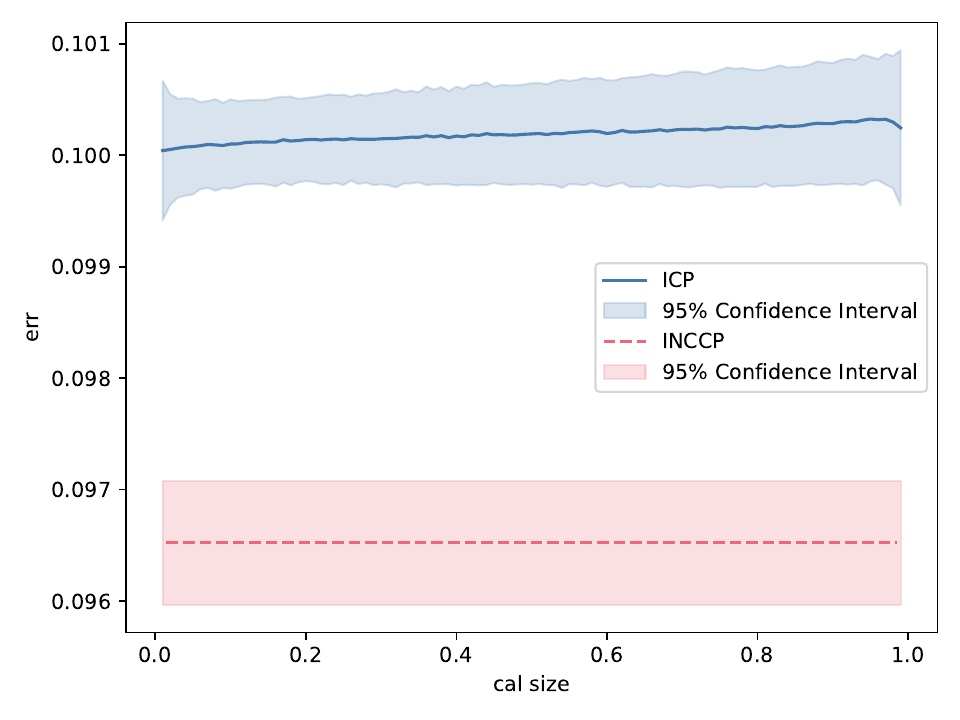}
                \caption{Average error rate over calibration set sizes for ICP and INCCP (which has no calibration set).}
                \label{fig:offlineUSPSErr}
            \end{subfigure}
            \caption{Average results over 1000 independent trials for ICP and INCCP over 99 calibration set sizes for ICP on the USPS dataset.}
            \label{fig:offlineCalsizeUSPS}
        \end{figure}
        
    \subsection{Offline regression}
    
        The results of the offline regression experiment on the Wine dataset are shown in Figure \ref{fig:offlineWine}, where the mean Winkler score, width, fraction of infinite intervals, and error rate are shown, together with 95\% confidence intervals. It should be noted that all mean values are computed only for the intervals that have finite widths, which complicates comparisons because no method was able to completely avoid infinite intervals, which means that the true mean values for the Winkler score and width are infinite in all cases. As usual, we begin by noting that both methods achieve the finite-sample ACI guarantee \eqref{eq:ACIGuarantee} on average (see Figure \ref{fig:offlineWineErr}), as well as for each individual trial.

        From Figures \ref{fig:offlineWineWinkler} and \ref{fig:offlineWineFracInf}, we can see that increasing the calibration set size seems to increase the ICP performance, as measured by the Winkler score and fraction of infinite intervals. The decrease in the mean Winkler score is almost monotonic, while inspecting the average width in Figure \ref{fig:offlineWineWidth} shows that the average width initially increases with increasing calibration size. The monotonic improvement in mean Winkler score of the ICP is somewhat surprising, since for very large calibration set sizes, the \texttt{RandomForestQuantileRegressor} is trained using very few examples, roughly 50 examples in the extreme case. It should be noted that the variation over 1000 independent trials increases with increasing calibration set size, as illustrated by the confidence intervals. On the other hand, the NCCP shows less variation and requires no calibration. The mean fraction of infinite intervals output by NCCP is consistently lower than that of ICP, as is the Winkler score, except for the extreme case where we use 99\% of the training set for calibration, where the mean Winkler score of the ICP is just slightly lower than that of NCCP.
                
        \begin{figure}[h!]
            \centering
            \begin{subfigure}[b]{0.48\linewidth}
                \includegraphics[width=\linewidth]{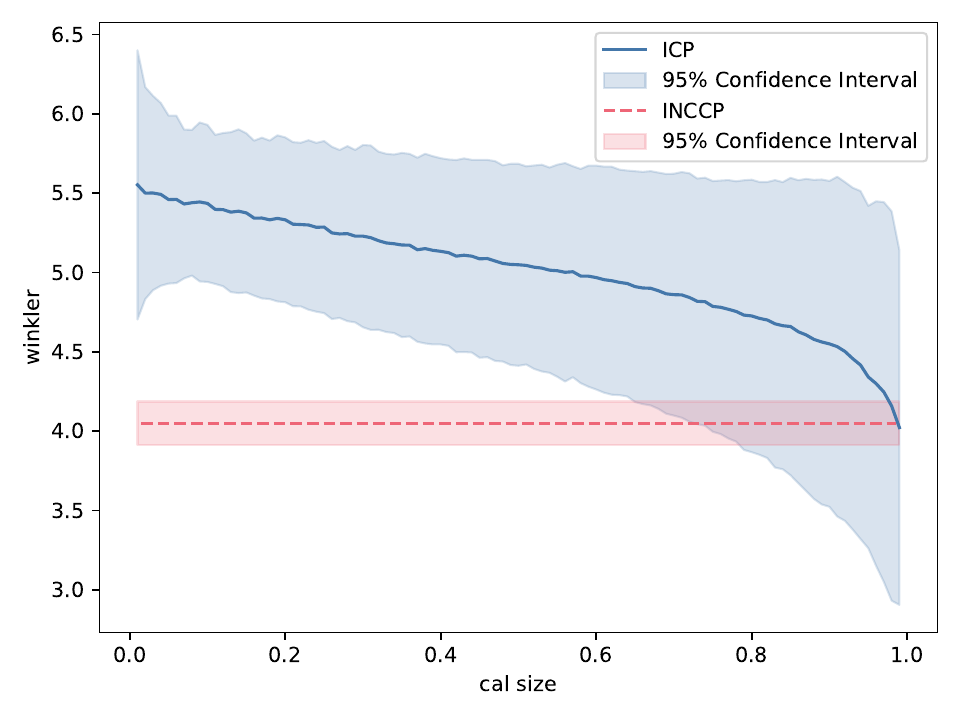}
                \caption{Mean Winkler score over calibration set sizes for ICP and INCCP (which has no calibration set).}
                \label{fig:offlineWineWinkler}
            \end{subfigure}
            \hfill 
            \begin{subfigure}[b]{0.48\linewidth}
                \includegraphics[width=\linewidth]{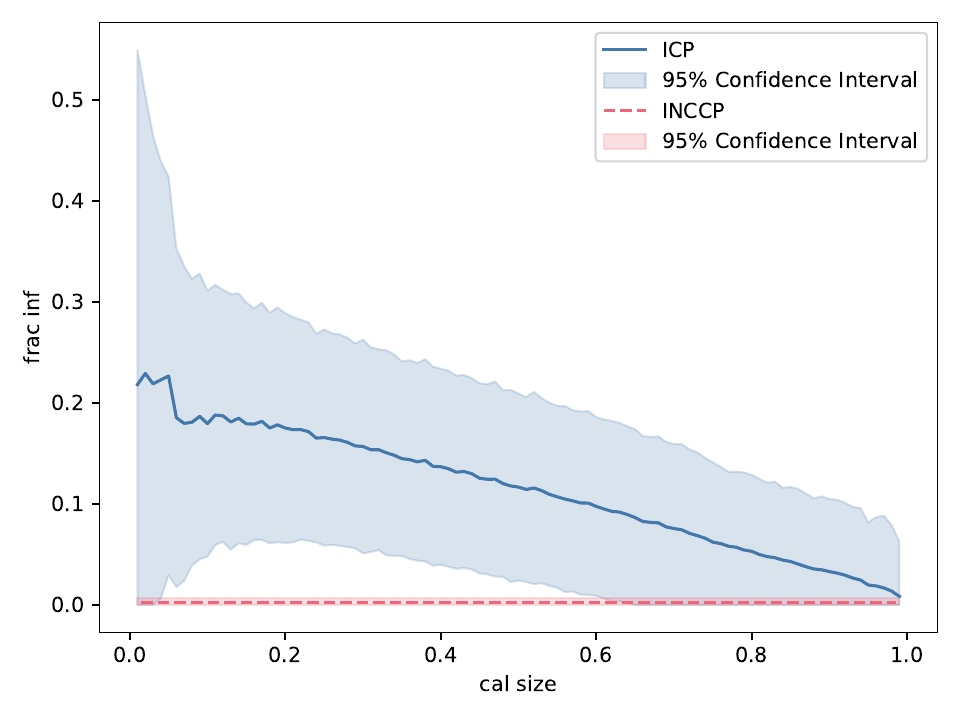}
                \caption{Average fraction of infinite intervals over calibration set sizes for ICP and INCCP (which has no calibration set).}
                \label{fig:offlineWineFracInf}
            \end{subfigure}
        
            \bigskip 
        
            \begin{subfigure}[b]{0.48\linewidth}
                \includegraphics[width=\linewidth]{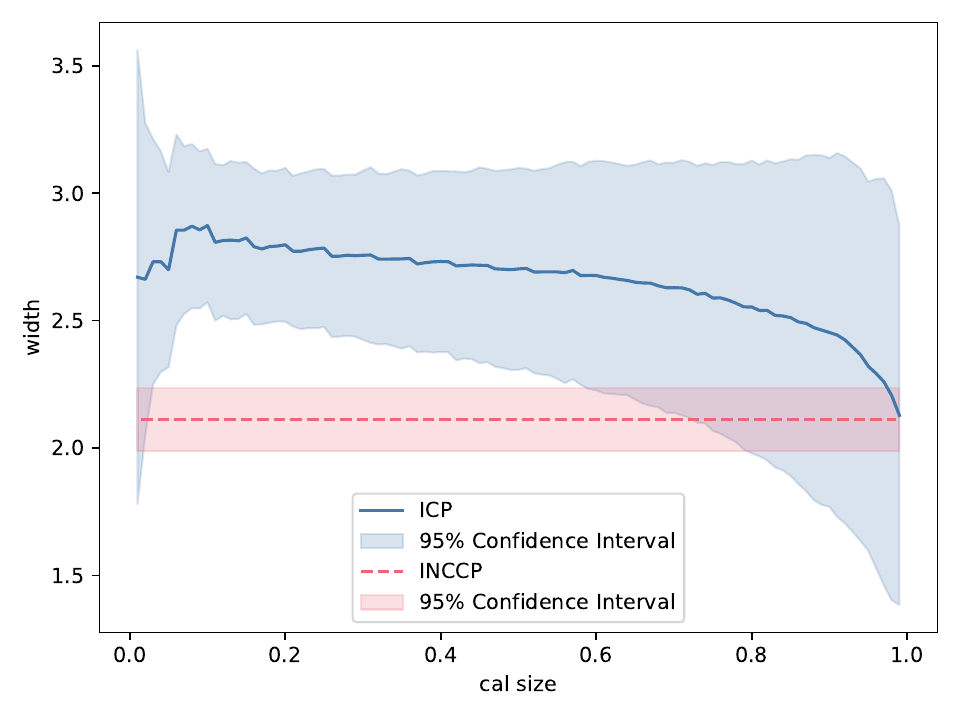}
                \caption{Mean width over calibration set sizes for ICP and INCCP (which has no calibration set).}
                \label{fig:offlineWineWidth}
            \end{subfigure}
            \hfill 
            \begin{subfigure}[b]{0.48\linewidth}
                \includegraphics[width=\linewidth]{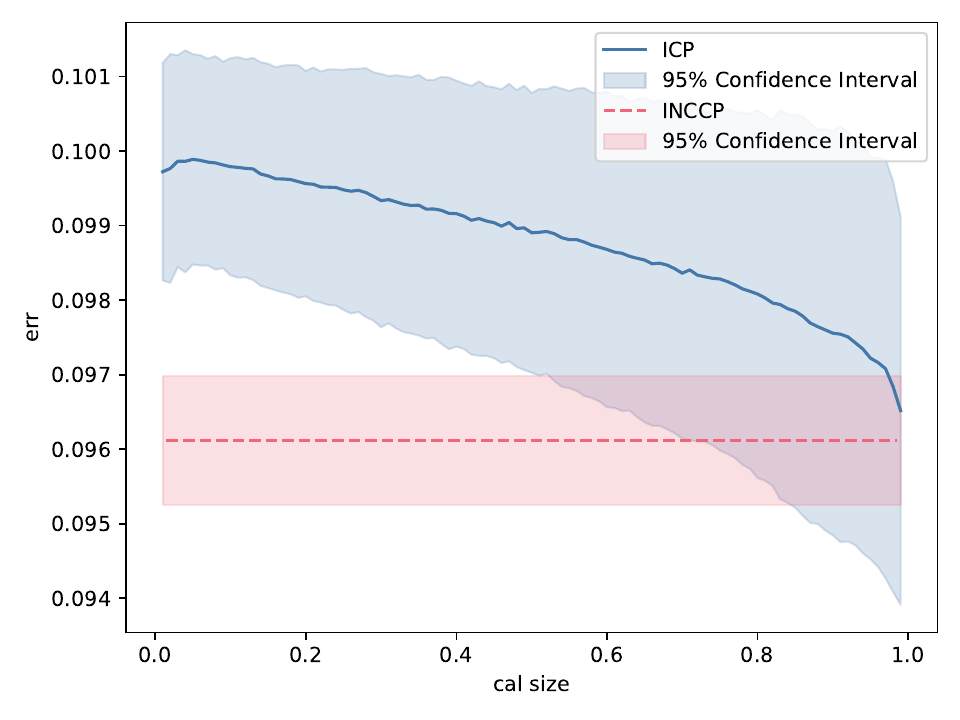}
                \caption{Mean error over calibration set sizes for ICP and INCCP (which has no calibration set).}
                \label{fig:offlineWineErr}
            \end{subfigure}
            \caption{Average results over 1000 independent trials for ICP and INCCP over 99 calibration set sizes for ICP on the Wine dataset. The plots show the mean error, Winkler score, width, and fraction of intervals with infinite width for ICP and INCCP for the Wine dataset. Note that the mean is computed for the finite intervals.}
            \label{fig:offlineWine}
        \end{figure}

\section{Discussion and Conclusions}\label{sec:conclusions}
    Adaptive conformal inference (ACI) was introduced to retain the asymptotic validity of conformal predictors under non-exchangeable data as well as some form of finite sample coverage guarantee \eqref{eq:ACIGuarantee}.
    In this paper, we have demonstrated that ACI does not rely on any specific properties of conformal predictors to achieve its finite sample guarantee. In fact, it does not even require the more general concept of a confidence predictor, where the prediction sets are required to be nested (see Definition \ref{def:confPred}). 
    However, we have argued that the property of nested prediction sets is the very least one should require when predicting with a confidence level. Without it, the coin-flip predictor (Definition \ref{def:coinFlipPredictor}) is exactly valid, but rather unhelpful. 

    Our findings highlight that the success of ACI in handling non-exchangeable data is primarily owing to its nature as an adaptive controller for the significance level, effectively steering the empirical error rate towards the target $\varepsilon$ over time. This control-theoretic view explains its robustness when applied to various types of confidence predictors, not just CP.
    
    We have mentioned several ways to construct non-conformal confidence predictors (NCCP) using popular machine learning methods, such as confidence thresholding, prediction intervals from parametric models, and quantile regression (with care to ensure nested prediction sets). 

    Because the validity guarantees of CP are lost if the exchangeability assumption is violated, and ACI provides finite sample coverage guarantees of another kind, we asked if anything is gained by using ACI with a CP over using it with an NCCP in situations where exchangeability cannot be expected. 

    \subsection{The online setting}
        In the online setting, compared to full conformal prediction (CP), NCCP is often much less computationally intense, and CP is often infeasible to implement, particularly in regression problems apart from some special cases (see, e.g. \cite{burnaev2014efficiency} and \cite{lei2019fast}). Our experiments on the USPS and the Wine datasets found little reason to prefer CP over NCCP in the online setting, as the latter performed as well or almost as well in terms of observed excess (OE) and Winkler score at a fraction of the computational cost. 

        Of course, if the exchangeability assumption holds, most NCCPs are not valid; however, if used together with ACI, the finite sample guarantee \eqref{eq:ACIGuarantee} still holds, as it makes no assumption about the data-generating distribution. Thus, if \eqref{eq:ACIGuarantee} is all that is desired, one may choose to use NCCP over CP, even in cases where the data are exchangeable. The principal reason for doing so would likely be to save computational resources. However, it is important to distinguish between the types of coverage guarantee. In the online setting, full CP guarantees that errors happen independently with a user-specified probability, which is much stronger than \eqref{eq:ACIGuarantee}.

    \subsection{The offline setting}
        In the offline setting, the main question was whether it is worth sacrificing some of the training data for calibration when the exchangeability assumption is violated. We do not gain any theoretical guarantee beyond what is provided by ACI to motivate this sacrifice, and one might suspect that it could pay to use all available data for training. Our experiments on the USPS and Wine datasets using inductive conformal predictors (ICP) and inductive non-conformal confidence predictors (INCCP) support this intuition. INCCP consistently outperformed ICP on all evaluation metrics for all calibration set sizes, with the exception of a minor improvement in the Winkler score for ICP on the Wine dataset when using 99\% of the training data for calibration. Moreover, the performance of INCCP was less variable over 1000 independent trials than that of ICP, which can likely be attributed to the lack of random splitting of the training data into proper training and calibration sets.

    \subsection{Future directions}
        In summary, further empirical studies on more datasets and predictors are needed. By far, the most widely used type of conformal predictors are ICPs owing to their more modest computational requirements. 
        The potential advantage of INCCP over ICP is that some training data must be set aside for calibration of the ICP, which can instead be used to train the INCCP. For large datasets, this may not be a major problem for ICP; however, if data are scarce, the calibration set may be better utilised to train an INCCP. Another domain where the need for a calibration set could be detrimental is time-series forecasting, in which the calibration set must be located after the proper training set. An INCCP then has the advantage of being trained on all data up to the point where the forecast begins, whereas the ICP has been trained on data that extends only to the start of the calibration set, in practice extending the forecast horizon by the size of the calibration set.
        Future studies should compare several ICPs based on popular machine learning methods and their INCCP counterparts. Both regression and classification datasets should be considered as well as time-series. 
        Another interesting subject for future empirical work is to compare INCCP with cross-conformal predictors \cite{vovk2015cross}, which is a hybrid of ICP and cross-validation that uses all available training data for both training and calibration.

        The finite-sample ACI guarantee \eqref{eq:ACIGuarantee} depends on the initial condition $\varepsilon_1$ of the ACI iteration \eqref{eq:ACI}. When choosing the step size according to \eqref{eq:gammaChoice}, one may be tempted to set $\varepsilon_1 = 1/2$ to minimise the required step size $\gamma$ that is required to achieve the chosen absolute error deviation bound $\delta$. However, this may not be the best choice for initial conditions from the perspective of efficient prediction sets. Future work should include methods for choosing a suitable initial condition for the ACI iteration. 

        Another direction of research is to improve ACI, as has been done in \cite{gibbs2024conformal} and \cite{pmlr-v235-podkopaev24a}. These studies focused on adaptive step size. Yet another possibility is to use the objects $x_n$ to either vary the step size or learn the optimal significance levels online.

        Finally, the idea of varying the significance level of the prediction sets to achieve certain goals is essentially a control problem. This problem falls under model-free control, which is concerned with controlling systems that have no explicit mathematical model. This was noted in \cite{angelopoulos2024conformal}, whose conformal PID method may be seen as a generalisation of ACI based on the classical PID control method. An effort to ensure the desired coverage while minimising the prediction set size may include incorporating ideas from model-free optimal control \cite{LAI2023110685}.

        \subsection{Conclusions}
        In conclusion, the main points of this paper can be summarised as follows:
        \begin{itemize}
            \item 
            \textbf{There is nothing conformal about ACI}\\
            The finite sample guarantee of ACI does not rely on the specific properties of CP, not even on NCCP, even if we argue that the latter is necessary in making it a versatile tool in non-exchangeable data settings.
            \item
            \textbf{Computational efficiency}\\
            NCCPs in the online setting are often significantly less computationally costly than CP, making them an attractive alternative for non-exchangeable data. We speculate that there may even be cases in which their (often) superior computational efficiency can make them preferable for exchangeable data.

            \item
            \textbf{Predictive efficiency}\\
            Our experimental results indicate that INCCP can outperform ICP on \\
            non-exchangeable data, as measured using popular evaluation metrics. However, more empirical results are required to determine the preferable method.

            \item 
            \textbf{Practical trade-offs}\\
            When choosing between CP and NCCP, different considerations are required in the online and batch settings. In both settings, depending on the specific NCCP, it may be more difficult to choose a suitable initial significance level. A natural starting point for CP is the target error rate $\eps$; however, this may be far from optimal for a general NCCP, depending on how the prediction sets are produced.
            \begin{itemize}
                \item 
                Online setting: The main consideration in the online setting is computational efficiency, where NCCP may be orders of magnitude faster. 
                \item 
                Batch setting: The principal disadvantage of ICP in the batch setting is that some part of the training set must be sacrificed for calibration. As mentioned in the discussion, if training data are scarce or in time-series forecasting, this could provide an edge to NCCP.
            \end{itemize}
            
        \end{itemize}

\section*{Acknowledgements}
    We are grateful to Vilma Ylv\'en and Emely Wiegand for their help in designing and producing the graphical abstract.
    The authors acknowledge the Swedish Knowledge Foundation and industrial partners for financially supporting the research and education environment on Knowledge Intensive Product Realisation SPARK at Jönköping University, Sweden. Project: PREMACOP grant no. 20220187.

\clearpage
\bibliographystyle{elsarticle-num} 
\bibliography{references}

\end{document}